\newtheorem{theorem}{Theorem}
\title{Sub-Goal Trees -- a Framework for Goal-Directed Trajectory Prediction and Optimization}
\author{%
  Tom Jurgenson \\
  EE Department\\
  Technion\\
   \And
  Edward Groshev \\
  Osaro Inc.\\
   \And
  Aviv Tamar \\
  EE Department\\
  Technion\\
}
\begin{document}

\maketitle

\begin{abstract}
Many AI problems, in robotics and other domains, are goal-directed, essentially seeking a trajectory leading to some goal state. In such problems, the way we choose to represent a trajectory underlies algorithms for trajectory prediction and optimization. Interestingly, most all prior work in imitation and reinforcement learning builds on a \emph{sequential} trajectory representation -- calculating the next state in the trajectory given its predecessors.
We propose a different perspective: a goal-conditioned trajectory can be represented by first selecting an intermediate state between start and goal, partitioning the trajectory into two. Then, recursively, predicting intermediate points on each sub-segment, until a complete trajectory is obtained. We call this representation a \emph{sub-goal tree}, and building on it, we develop new methods for trajectory prediction, learning, and optimization.
We show that in a supervised learning setting, sub-goal trees better account for trajectory variability, and can predict trajectories exponentially faster at test time by leveraging a concurrent computation. Then, for optimization, we derive a new dynamic programming equation for sub-goal trees, and use it to develop new planning and reinforcement learning algorithms. These algorithms, which are not based on the standard Bellman equation, naturally account for hierarchical sub-goal structure in a task.
Empirical results on motion planning domains show that the sub-goal tree framework significantly improves both accuracy and prediction time.
\end{abstract}
\section{Introduction}
Many AI problems can be characterized as learning or optimizing goal-directed trajectories of a dynamical system. For example, robotic skill learning seeks trajectories that perform some task, such as hitting a table-tennis ball~\cite{mulling2013learning} or opening a door~\cite{gu2017deep}, while in motion planning a trajectory that reaches some goal without colliding into obstacles is sought~\cite{lavalle2006planning}. For such problems, AI algorithms such as imitation learning (IL) and reinforcement learning (RL) must represent the trajectory in some computational structure, and most prior work has built on a trajectory representation that mirrors the underlying dynamical system: trajectory states evolve in a sequential manner, where the next state is computed based on the previous states~\cite{argall2009survey, sutton1998reinforcement}.

However, the sequential representation has several drawbacks. When learning to imitate trajectories, for example, it is known that accumulating errors can lead to diverging trajectories~\cite{ross2011reduction}, and enforcing the condition that the goal is reached can require additional mechanisms, such as modeling the goal as an attractor in a dynamical system~\cite{ijspeert2013dynamical}. In domains where fast prediction is required, such as robotic motion planning or autonomous driving, the sequential trajectory prediction can be a limiting factor.
In RL, which relies on the sequential Bellman equation, learning to optimize goal-conditioned policies is a challenging problem~\cite{schaul2015universal,andrychowicz2017hindsight}.

In this work, we challenge the conventional representation of a trajectory as a sequential process, and posit that for goal-directed problems, a structure that naturally segments the trajectory into sub-goals is more appropriate.
Our main insight is that a goal-directed trajectory can be represented as a tree structure, where first, an intermediate sub-goal between the initial state and the goal is computed, segmenting the trajectory into two. 
Then, for each segment, an additional sub-goal is computed. 
This divide-and-conquer process continues recursively until a complete trajectory is obtained (see Figure \ref{fig:one_step_vs_trajsplit} for an illustration.). 
We call this representation a \emph{sub-goal tree}, and propose it as a basis for developing prediction and optimization algorithms for both the supervised learning and RL settings.

\begin{figure}
\centering
\includegraphics[width=\textwidth]{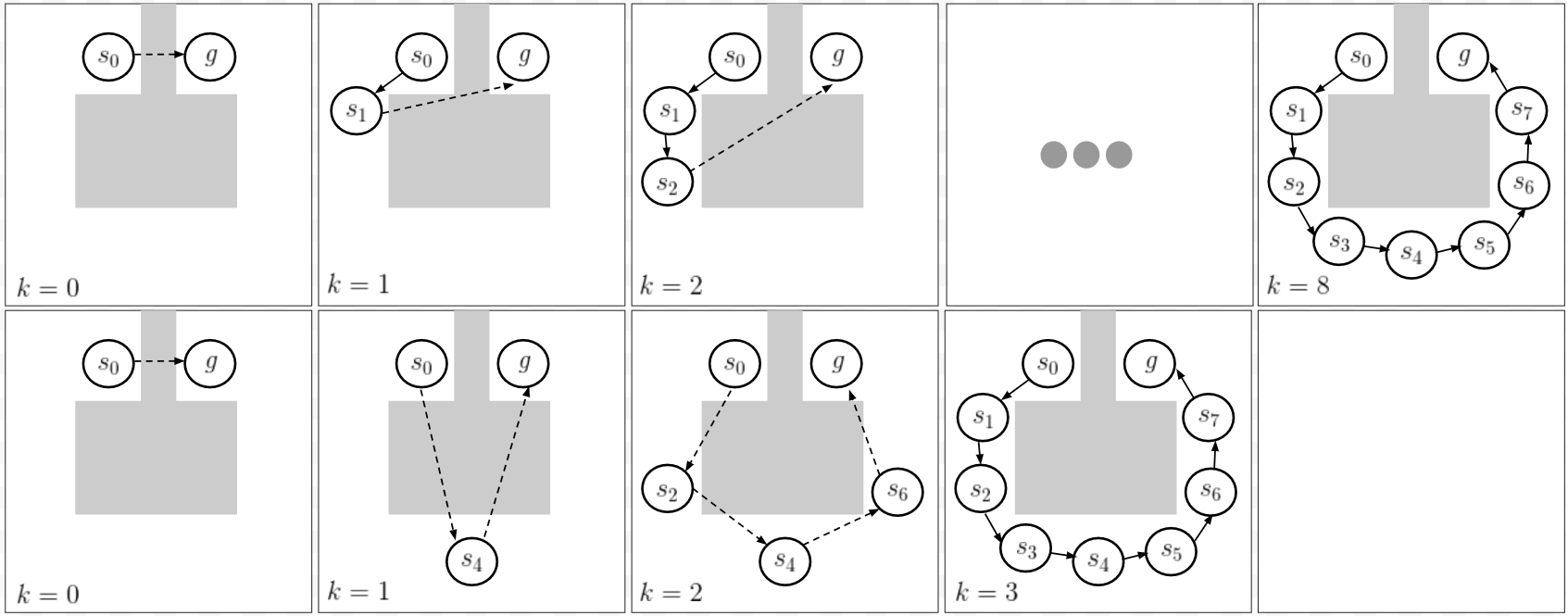}
\caption{Trajectory prediction methods. Upper row: a conventional \textit{Sequential } representation. Lower row: \textit{Sub-Goal Tree} representation. 
Solid arrows indicate predicted segments, while dashed arrows indicate segments that still require to be predicted.
By concurrently predicting sub-goals, a \textit{Sub-Goal Tree} requires $3$ sequential computations, while the \textit{sequential} requires $8$.}
\label{fig:one_step_vs_trajsplit}
\vspace{-5mm}
\end{figure}

Sub-goal tree trajectories are much faster to predict at test time, since sub-goals can be predicted concurrently for multiple segments in a trajectory, a feature that is important for domains where fast prediction is required. 
Additionally, sub-goal trees can be used as an \emph{anytime} prediction algorithm, as the sub goals provide a meaningful approximation of the trajectory from start to goal. In the supervised learning setting, the sub-goal tree representation naturally leads to an algorithm that learns to predict sub-goals in a trajectory. We show that this approach can better account for multi-modal variations in the trajectory, and leads to improved accuracy. 

In the optimization setting, we consider the all-pairs shortest path problem on a graph (APSP;\cite{lavalle2006planning}), and propose a new dynamic programming (DP) principle for APSP based on sub-goal trees. 
Then, based on this formulation, we develop an approximate DP (a.k.a. RL) algorithm for continuous-state goal-based problems, which naturally accounts for any hierarchical sub-goal structure in the problem.
We show that this algorithm can learn significantly better than conventional goal-based RL algorithms that rely on Bellman's sequential DP equation~\cite{bertsekas2005dynamic}. 
To the best of our knowledge, this is the first APSP algorithm that can work with function approximation.

\section{Problem Formulation}
In this work we are interested in learning and optimizing trajectories of dynamical systems. We present a uniform formulation based on controlled dynamical systems.
Consider a deterministic controlled dynamical system in discrete time, defined over a state space $S$ and an action space $U$: 
\begin{equation}\label{eq:dynamical_system}
    s_{t+1} = f(s_t, u_t),
\end{equation}
where $s_t, s_{t+1}\in S$ are the states at time $t$ and $t+1$, respectively, $u_t \in U$ is the control at time $t$, and $f$ is a stationary state transition function.\footnote{We leave the extension of our ideas to stochastic and time varying systems for future work.}

In the Imitation Learning (IL) setting, our goal is to imitate trajectories of \eqref{eq:dynamical_system} where actions are selected by an expert. Specifically, we are given a dataset $D^{\pi^*}=\left\{ \tau_i\right\}_{i=1}^N$ of $N$ trajectory demonstrations, provided by an expert policy $\pi^*$.
Each trajectory demonstration $\tau_i$ from the dataset $D^{\pi^*}$, is a sequence of $T_i$ states, i.e. $\tau_i = s_0^i, s_1^i \dots s_{T_i}^i$.\footnote{We note that limiting our discussion to states only can be easily extended to include actions as well by concatenating states and actions. However, we refrain from that in this work in order to simplify the notations. } 
In this work, we assume a goal-based setting, that is, we assume that the expert policy generates  trajectories that lead to a goal state which is the last state in each trajectory demonstration. Our goal is to learn a policy that, given a pair of current state and goal state $(s,g)$, predicts the trajectory that $\pi^*$ would have chosen to reach $g$ from $s$.

In many practical settings, however, expert demonstrations are difficult to obtain. In such cases we resort to an RL setting, where an optimal policy has to be learned from data. In this case, we extend the dynamical system \eqref{eq:dynamical_system} to include costs. Given some initial state $s_0$ and a goal $g$, an optimal trajectory reaches the goal while minimizing the costs:
\begin{equation}\label{eq:opt_control}
        \min_{T, u_0,\dots,u_{T}} \sum_{t=0}^T c(s_t, u_t), 
        \quad s.t. \quad {s_{t + 1}} = {f}({s_t},{u_t}), 
        \quad {s_{T}} = g,
\end{equation}
where $c(s,u)$ is a non-negative cost function. We assume that the dynamics function $f$ is not known in advance. Instead, we are given a data set of transitions $D = \left\{ s, u, c, s'\right\}$, where $s'=f(s,u)$ and $c=c(s,u)$, sampled from some arbitrary distribution. From this data, our goal is to learn a goal-conditioned policy $\pi(s,g) : S\times S \to U$ that, when used to select actions in the system, produces an optimal trajectory. This setting is also known as batch-RL~\cite{lagoudakis2003least,ernst2005tree,riedmiller2005neural}. 
\section{Imitation Learning with Sub-Goal Trees}\label{sec:goal-based-il}
In this section, we consider the imitation learning problem, and show that a method based on sub-goal trees is more efficient than the conventional approach.
We focus on the Behavioral Cloning~(BC)\cite{pomerleau1989alvinn} approach to IL, where a parametric model for a policy $\hat{\pi}$ with parameters $\theta$ is learned by maximizing the log-likelihood of observed trajectories in $D^{\pi^*}$, i.e.,
\begin{align}\label{eq:bc-max-likelihood-general}
    \theta^* = \arg\max_\theta { \mathbb{E}_{\tau_i\sim D^{\pi^*}}\left[
        \log P_{\hat{\pi}}(\tau_i=s_0^i,s_1^i, \dots s_T^i|s_0^i,s_{T}^i; \theta)
    \right] }.
\end{align}
Denote the horizon $T$ as the maximal number of states in a trajectory. 
For ease of notation we assume $T$ to be the same for all trajectories\footnote{For trajectories with $T_i < T$ assume $s_i^{T_i}$ repeats $T-T_i$ times, alternatively, generate middle states from data until $T_i=T$}. 
Also, let $s_i=s_0^i$ and $g_i=s_{T_i}^i$ denote the start and goal states for $\tau_i$.
We ask -- 
how to best represent the distribution $P_{\hat{\pi}}(\tau|s,g;\theta)$?

The \textit{sequential} trajectory representation~\cite{pomerleau1989alvinn,ross2011reduction,zhang2018auto,qureshi2018motion}, a popular approach, decomposes $P_{\hat{\pi}}(s_0,s_1, \dots s_T|s,g;\theta)$ by sequentially predicting states in the trajectory conditioned on previous predictions.
Concretely, let $h_t=s_0,s_1,\dots,s_t$ denote the history of the trajectory at time index $t$, the decomposition assumed by the \textit{sequential} representation is $P_{\hat{\pi}}(s_0,s_1, \dots s_T|s,g;\theta) = P_{\hat{\pi}}(s_1 | h_0, g ; \theta) P_{\hat{\pi}}(s_2 | h_1, g;\theta)\dots P_{\hat{\pi}}(s_T | h_{T-1}, g ; \theta)$.
Using this decomposition, \eqref{eq:bc-max-likelihood-general} becomes:

\begin{align}\label{eq:bc-max-likelihood-sequential}
    \theta^* = \arg\max_\theta { \mathbb{E}_{\tau_i\sim D^{\pi^*}}\left[
        \sum_{t=1}^{T}{\log{P_{\hat{\pi}}(s_{t+1}^i|h_t^i,g^i; \theta)}}
    \right] }.
\end{align}
We can learn $P_{\hat{\pi}}$ using a batch stochastic gradient descent (SGD) method.
To generate a sample $(s_t, h_{t-1}, g)$ in a training batch, a trajectory $\tau_i$ is sampled from $D^{\pi^*}$, and an observed state $s_t^i$ is further sampled from $\tau_i$. 
Next, the history $h_{t-1}^i$ and goal $g_i$ are extracted from $\tau_i$.
After learning, sampling a trajectory between $s$ and $g$ is a straight-forward iterative process, 
where the first prediction is given by $s_1 \sim P_{\hat{\pi}}(s|h_0=s,g; \theta)$, and every subsequent prediction is given by $s_{t+1} \sim P_{\hat{\pi}}(s|h_t=(s,s_1,\dots s_t),g; \theta)$.
This iterative process stops once some stopping condition is met (such as a target prediction horizon is reached, or a prediction is 'close-enough' to $g$).
Pseudo-code for learning and prediction is further detailed in the supplementary material.



This sequential decomposition has two major drawbacks. First, since prediction at time $t$ requires the history $h_{t-1}$, the computation time of predicting an entire trajectory is linear in the horizon $T$. 
Furthermore, as each prediction accumulates error, the trajectory can drift significantly away from the demonstrations~\cite{ross2011reduction}.
Both drawbacks could be mitigated by using an alternative non-sequential decomposition of the trajectory distribution, as we describe next.
\textbf{The Sub-Goal Tree Representation:}
We propose an alternative representation for a trajectory, based on a 'divide-and-conquer' approach.
Let $\tau(s_1,s_2)$ denote the sub-trajectory of $\tau$ starting in $s_1$ and ending in $s_2$.
We note that for any mid-point $s_m$ in a trajectory $\tau$ with start state $s$ and goal state $g$, $\tau$ can be written as a concatenation of two sub-trajectories: $[\tau(s,s_m), \tau(s_m,g)]$.\footnote{Here and throughout the rest of the paper, the concatenation $[\tau(s,s_m), \tau(s_m,g)]$ is understood to contain the midpoint $s_m$ only once.} 
To simplify notation, we consider the case where the horizon $T$ is a power of two.
\footnote{To handle a general horizon, one can repeat elements to obtain a power of 2 horizon. Alternatively, interpolation between states can be performed.}
We therefore choose to decompose the probability of a trajectory recursively as (cf. Figure~\ref{fig:one_step_vs_trajsplit})  
$
    P_{\hat{\pi}}(s_0,s_1, \dots s_T|s,g;\theta) = P_{\hat{\pi}}(s_{T/2} | s, g ; \theta) P_{\hat{\pi}}(s_{T/4} | s, s_{T/2};\theta)P_{\hat{\pi}}(s_{3T/4} | s_{T/2},g;\theta)\dots
$,
the recursion ends when the start and stop indices are equal and we set the probability to $P_{\hat{\pi}}(s_t | s_t, s_t ; \theta)=1$.
Applying this decomposition on Eq.\ref{eq:bc-max-likelihood-general} results in: 
\begin{equation}\label{eq:bc-max-likelihood-sub-goal}
    \!\theta^* \!\!=\!\! \arg\max_\theta { \mathbb{E}_{\tau_i\sim D^{\pi^*}}\!\!\!\left[
        \log{P_{\hat{\pi}}(s_{T/2}^i | s^i, g^i ; \theta)}\!\!+\!\log{P_{\hat{\pi}}(s_{T/4}^i | s^i, s_{T/2}^i;\theta)}\!\!+\!\log{P_{\hat{\pi}}(s_{3T/4}^i | s_{T/2}^i,g^i;\theta)}\! \dots 
    \right] }
\end{equation}

To organize our data for optimizing Eq.\ref{eq:bc-max-likelihood-sub-goal}, we first sample a trajectory $\tau_i$ from $D^{\pi^*}$ for each sample in the batch. From $\tau_i$ we sample two states $s_a^i$ and $s_b^i$ and obtain their midpoint $s_{\frac{a+b}{2}}^i$.
Pseudo-code for sub-goal tree learning is provided in Section \ref{supp:algorithms} of the supplementary material. 
A clear advantage of Sub-goal trees over sequential representations is in prediction, which can exploit parallel computation. In predicting a sub-goal tree between $s$ and $g$, we first predict a midpoint $s_m$. 
Then, the predictions of midpoints for the two segments $(s,s_m)$ and $(s_m,g)$ are \textit{independent}, and can be computed concurrently. 
This computation can be described by a binary tree rooted at $\tau(s,g)$.
Recursively, at each level $k$ of the tree we can predict $2^k$ midpoints concurrently, resulting in an exponential speedup in prediction time (requiring $O(\log(T))$ to predict $T$-horizoned trajectory, as opposed to $O(T)$ for sequential prediction). 
Algorithm \ref{alg:sub-goal-sl-predict} shows the prediction process of the \textit{Sub-Goal Tree} approach, and Figure \ref{fig:one_step_vs_trajsplit} shows the difference in prediction strategies.



\begin{algorithm}[htp]
  \SetAlgoLined\DontPrintSemicolon
  \SetKwProg{myalg}{Algorithm}{}{}
  \SetKwFunction{predictSGT}{PredictSGT}
  \myalg{ }{
  \nl Input: parameters $\theta$ of parametric distribution $P_{\hat{\pi}}$, start state $s$, goal state $g$, max depth $K$ \;
  \nl \Return [$s$] + \predictSGT{$\theta$, $s$, $g$, $K$} +[$g$]\;
  }{}
  \setcounter{AlgoLine}{0}
  \SetKwProg{myproc}{Procedure}{}{}
  \myproc{\predictSGT{$\theta$, $s_1$, $s_2$, $k$}}{
  \If{$k>0$}{
  \nl Predict midpoint $s_m \sim P_{\hat{\pi}}(s_m |s_1, s_2;\theta)$\;
  \nl \Return\predictSGT{$\theta$, $s_1$, $s_m$, $k-1$} + [$s_m$] + \predictSGT{$\theta$, $s_m$, $s_2$, $k-1$} \;
  }
  }
\caption{Sub-Goal Tree BC Trajectory Prediction }\label{alg:sub-goal-sl-predict}
\end{algorithm}

\section{Trajectory Optimization with Sub-Goal Trees}\label{s:rl}
We next discuss how to solve the batch-RL problem using sub-goal trees. We start from an underlying dynamic programming principle and then proceed to develop learning algorithms.

\subsection{A Dynamic Programming Principle for Sub-Goal Trees}

In this section we derive a dynamic programming principle for goal-conditioned trajectory optimization based on sub-goal trees. 
To simplify our derivation, we restrict ourselves to a discrete-state formulation of Problem \eqref{eq:opt_control}, also known as the all-pairs shortest path (APSP) problem on a weighted graph. 
We will later extend our approach to the continuous case using function approximation. 

Consider a directed and weighted graph with $N$ nodes $s_1,\dots, s_N$, and denote by $c(s,s')\geq 0$ the weight of edge $s\to s'$.\footnote{Technically, and similarly to standard APSP algorithms~\cite{russel2010AI}, we only require that there are no negative cycles in the graph. To simplify our presentation, however, we restrict $c$ to be non-negative. } To simplify notation, we replace unconnected edges by edges with weight $\infty$, creating a complete graph. The APSP problem seeks the shortest paths (i.e., a path with minimum sum of costs) from any start node $s$ to any goal node $g$ in the graph. Note the similarity to Problem \eqref{eq:opt_control}, where feasible transitions of the dynamical system are now represented by edges between states in the graph. We next derive an APSP algorithm based on sub-goal trees.

Let $V_k(s, s')$ denote the shortest path from $s$ to $s'$ in $2^k$ steps or less. Note that by our convention about unconnected edges above, if there is no such trajectory in the graph then $V_k(s, s')=\infty$. We observe that $V_k$ obeys a dynamic programming relation, which we term \emph{sub-goal tree dynamic programming} (STDP), as established next.
\begin{theorem}
Consider a weighted graph with $N$ nodes and no negative cycles. Let $V_k(s, s')$ denote the cost of the shortest path from $s$ to $s'$ in $2^k$ steps or less, and let $V^*(s,s')$ denote the cost of the shortest path from $s$ to $s'$. Then, $V_k$ can be computed according to the following equations: 
\begin{equation}\label{eq:DP_trajsplit}
    \begin{split}
        V_0(s, s') &= c(s, s'), \quad \forall s,s': s\neq s'; \\
        V_k(s, s) &= 0, \quad \forall s; \\
        V_k(s, s') &= \min_{s_m} \left\{ V_{k-1}(s, s_m) + V_{k-1}(s_m, s')\right\}, \quad \forall s,s' : s\neq s'.
    \end{split}
\end{equation}
Furthermore, for $k \geq \log_2(N)$ we have that $V_k(s, s') = V^*(s, s')$ for all $s,s'$. 
\end{theorem}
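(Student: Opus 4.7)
The plan is to verify the three base/recursive equations in \eqref{eq:DP_trajsplit} directly from the definition of $V_k$, and then separately establish the termination statement $V_k = V^*$ for $k\geq \log_2 N$. The base case $V_0(s,s')=c(s,s')$ for $s\neq s'$ is immediate: a path with at most $2^0=1$ edge from $s\neq s'$ must consist of exactly the single edge $s\to s'$, whose cost is $c(s,s')$ (or $\infty$ if no such edge exists). The identity $V_k(s,s)=0$ follows because the empty path from $s$ to itself uses zero edges ($\leq 2^k$) and costs $0$, and by the non-negative cost / no-negative-cycle assumption no shorter value is attainable.

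For the recursion, I would prove the two inequalities separately. For $V_k(s,s') \leq \min_{s_m}\{V_{k-1}(s,s_m)+V_{k-1}(s_m,s')\}$, fix any $s_m$ and concatenate an optimal path of length $\leq 2^{k-1}$ from $s$ to $s_m$ with one of length $\leq 2^{k-1}$ from $s_m$ to $s'$; the result is a path from $s$ to $s'$ of length at most $2^k$, hence feasible for $V_k(s,s')$, with total cost equal to the sum. Note that $s_m\in\{s,s'\}$ is admitted in the minimization, so the case in which the optimal $V_k$-path is shorter than $2^{k-1}$ on one side is already covered (e.g.\ choose $s_m=s$ and use $V_{k-1}(s,s)=0$). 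For the reverse inequality, take any feasible path $s=x_0\to x_1 \to \dots \to x_\ell = s'$ with $\ell \leq 2^k$ achieving cost $V_k(s,s')$, and split it at $s_m=x_{\lceil \ell/2 \rceil}$: both halves have length at most $\lceil 2^k/2\rceil = 2^{k-1}$, so by definition their costs are lower-bounded by $V_{k-1}(s,s_m)$ and $V_{k-1}(s_m,s')$ respectively, giving $V_k(s,s') \geq V_{k-1}(s,s_m)+V_{k-1}(s_m,s') \geq \min_{s_m}\{\cdot\}$. The case where no feasible path of length $\leq 2^k$ exists is handled by the $\infty$ convention on both sides.

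For the final claim, I would use the standard APSP observation that because all costs are non-negative (and there are no negative cycles), there exists an optimal path from $s$ to $s'$ that is simple, hence contains at most $N-1$ edges, and therefore at most $N\leq 2^k$ edges whenever $k\geq\log_2 N$. Thus $V^*(s,s')$ is attained by a path feasible for $V_k$, yielding $V_k(s,s')\leq V^*(s,s')$; the reverse inequality $V^*(s,s')\leq V_k(s,s')$ is trivial since every $V_k$-feasible path is a path in the graph.

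The main obstacle is the midpoint split in the recursion: one must be careful that the splitting yields two sub-paths whose lengths are each at most $2^{k-1}$, and that the edge cases $\ell=0$ (i.e.\ $s=s'$), $\ell=1$, and $\ell$ odd all still permit a valid intermediate node, which is why the recursion must allow $s_m \in \{s,s'\}$ (together with $V_{k-1}(s,s)=0$) rather than restricting $s_m$ to be strictly interior.
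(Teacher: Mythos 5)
Your proof is correct and follows essentially the same route as the paper's: induction on $k$, establishing the recursion by combining the concatenation upper bound with a midpoint split of an optimal path of length at most $2^k$, and concluding via the bound on the number of edges in a shortest path when there are no negative cycles. You are somewhat more explicit than the paper about the edge cases (the $\lceil \ell/2 \rceil$ split for odd lengths and allowing $s_m \in \{s, s'\}$ with $V_{k-1}(s,s)=0$), which the paper handles implicitly via the complete-graph convention, but the underlying argument is the same.
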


\begin{proof}
First, by definition, the shortest path from $s$ to itself is $0$. In the following, therefore, we assume that $s\neq s'$. We will show by induction that each $V_k(s,s')$ in Algorithm \eqref{eq:DP_trajsplit} is the cost of the shortest path from $s$ to $s'$ in $2^k$ steps or less.

Let $\tau_k(s,s')$ denote a shortest path from $s$ to $s'$ in $2^k$ steps or less, and let $V_k(s,s')$ denote its corresponding cost. 
Our induction hypothesis is that $V_{k-1}(s, s')$ is the cost of the shortest path from $s$ to $s'$ in $2^{k-1}$ steps or less. We will show that $V_k(s, s') = \min_{s_m} \left\{ V_{k-1}(s, s_m) + V_{k-1}(s_m, s')\right\}$.
Assume by contradiction that there was some $s^*$ such that $V_{k-1}(s, s^*) + V_{k-1}(s^*, s') < V_k(s,s')$. Then the concatenated trajectory $[\tau_{k-1}(s,s^*), \tau_{k-1}(s^*,s')]$ would have $2^k$ steps or less,
contradicting the fact that $\tau_k(s,s')$ is a shortest path from $s$ to $s'$ in $2^k$ steps or less. So we have that $V_k(s, s') \leq \left\{ V_{k-1}(s, s_m) + V_{k-1}(s_m, s')\right\} \quad \forall s_m$. 
Since the graph is complete, $\tau_k(s,s')$ can be split into two trajectories of length $2^{k-1}$ steps or less. Let $s_m$ be a midpoint in such a split. 
Then we have that $V_k(s, s') = \left\{ V_{k-1}(s, s_m) + V_{k-1}(s_m, s')\right\}$. 
So equality can be obtained and thus we must have $V_k(s, s') = \min_{s_m} \left\{ V_{k-1}(s, s_m) + V_{k-1}(s_m, s')\right\}$.
To complete the induction argument, we need to show that $V_0(s, s') = c(s, s')$. This holds since for $k=0$, for each $s,s'$, the only possible trajectory between them of length $1$ goes through the edge $s,s'$. 

Finally, since there are no negative cycles in the graph, for any $s,s'$, the shortest path has at most $N$ steps. Thus, for $k= \log_2 (N)$, we have that  $V_k(s, s')$ is the cost of the shortest path in $N$ steps or less, which is the shortest path between $s$ and $s'$.
\end{proof}

The STDP algorithm is built on the sub-goal tree idea. To see this, note that in Eq.~\eqref{eq:DP_trajsplit}, the minimization over $s_m$ is effectively a search for the next sub-goal in the sub-goal tree. Furthermore, predicting a sub-goal tree of depth $k$ between a pair of states $(s,g)$ can be done by recursively computing sub-goals according to Eq.~\eqref{eq:DP_trajsplit}, starting from $V_k(s,g)$. 
The main benefit of STDP however, is that it lends to a simple \emph{approximate dynamic programming} (a.k.a. RL) formulation using function approximation, as we show next. 


\subsection{Batch RL with Sub-Goal Trees}
We now describe a batch RL algorithm with function approximation based on the STDP algorithm above. Our approach is inspired by the fitted-Q algorithm for finite horizon Markov decision processes (\cite{tsitsiklis2001regression}; see also~\cite{ernst2005tree,riedmiller2005neural} for the discounted horizon case). 
Assume that we have some estimate $\hat{V}_k(s,s')$ of the value function of depth $k$ in STDP. Then, for any pair of start and goal states $s,g$, we can estimate $\hat{V}_{k+1}(s,g)$ as 
\vspace{-1em}
\begin{equation}\label{eq:STDP_min}
    \hat{V}_{k+1}(s,g) = \min_{s_m} \left\{ \hat{V}_{k}(s, s_m) + \hat{V}_{k}(s_m, g)\right\}
\end{equation} 
Thus, if our data consisted of start and goal pairs, we could use \eqref{eq:STDP_min} to generate \emph{regression targets} for the next value function, and use any regression algorithm to fit $\hat{V}_{k+1}(s,s')$. This is the essence of the approximate STDP algorithm (Algorithm \ref{alg:approximate_STDP}). Since our data does not contain explicit goal states, we simply define goal states to be randomly selected states from within the data.

The first iteration $k=0$ in STDP, however, requires special attention. We need to fit the cost function for neighboring states, yet make sure that states which are not reachable in a single transition have a high cost. To this end, we fit the observed costs $c$ to the observed transitions $s,s'$ in the data, and a high cost $C_{max}$ to transitions from the observed states to randomly selected states.

We also need a method to approximately solve the minimization problem in \eqref{eq:STDP_min}. In our experiments, we discretized the state space and performed a simple grid search. Other methods could be used in general. For example, if $V_k$ is represented as a neural network, then one can use gradient descent. Naturally, the quality of the approximate STDP solution will depend on the quality of solving this minimization problem. 

\begin{algorithm}[htp]
  \SetAlgoLined\DontPrintSemicolon
  \SetKwProg{myalg}{Algorithm}{}{}
  \myalg{ }{
  \nl Input: dataset $D = \left\{ s, u, c, s'\right\}$, Maximum path cost $C_{max}$ \;
  \nl Create transition data set $D_{trans} = \left\{ s, s'\right\}$ and targets $T_{trans} = \left\{ c\right\}$ with $s,s',c$ taken from $D$\;
  \nl Create random transition data set $D_{random} = \left\{ s, s_{rand}\right\}$ and targets $T_{random} = \left\{ C_{max}\right\}$ with $s,s_{rand}$ randomly chosen from states in $D$\;
  \nl Create self transition data set $D_{self} = \left\{ s, s\right\}$ and targets $T_{self} =\left\{ 0\right\}$ with $s$ taken from $D$\;
  \nl Fit $\hat{V}_0(s,s')$ to data in $D_{trans}, D_{random}, D_{self}$ and targets $T_{trans}, T_{random}, T_{self}$\;
  \For{$k: 1...K$}{
  \nl Create goal data set $D_{goal} \!=\! \left\{ s, g\right\}$ and targets 
$T_{goal} \!=\! \{ \min_{s_m} \!\!\{ \hat{V}_{k-1}(s, s_m) \!+\! \hat{V}_{k-1}(s_m, g)\}\}$ with $s,g$ randomly chosen from states in $D$\;
  \nl Fit $\hat{V}_k(s,s')$ to data in $D_{goal}$ and targets in $T_{goal}$\;
  }
  }
\caption{Approximate STDP}
\label{alg:approximate_STDP}
\end{algorithm}

\vspace{-1em}
\paragraph{Why not use the Floyd-Warshall Algorithm?}
At this point, the reader may question why we do not build on the Floyd-Warshall (FW) algorithm for the APSP problem. The FW method maintains a value function $V_{FW}(s,s')$ of the shortest path from $s$ to $s'$, and updates the value using the relaxation $V_{FW}(s,s') := \min \left\{V_{FW}(s,s'), V_{FW}(s,s_m)+V_{FW}(s_m,s')\right\}$. If the updates are performed over all $s_m, s,$ and $s'$ (in that sequence), $V_{FW}$ will converge to the shortest path, requiring $O(N^3)$ computations~\cite{russel2010AI}. One can also perform relaxations in an arbitrary order, as was suggested by Kaelbling~\cite{kaelbling1993learning}, and more recently in~\cite{dhiman2018floyd}, to result in an RL style algorithm. However, as was already observed in~\cite{kaelbling1993learning}, the FW relaxation requires that the values always over-estimate the optimal costs, and any under-estimation error, due to noise or function approximation, gets propagated through the algorithm without any way of recovery, leading to instability. Indeed, both \cite{kaelbling1993learning,dhiman2018floyd} showed results only for table-lookup value functions, and in our experiments we have found that replacing the STDP update with a FW relaxation (reported in the supplementary) leads to instability when used with function approximation. On the other hand, the complexity of STDP is $O(N^3\log N)$, but the explicit dependence on $k$ in the value function allows for a stable update when using function approximation.
\section{Related work}
Various trajectory representations have been investigated for learning robotic skills~\cite{mulling2013learning, sung2018robobarista} and navigation~\cite{qureshi2018motion}. Dynamical movement primitives~\cite{ijspeert2013dynamical} is a popular approach that represents a continuous trajectory as a dynamical system with an attractor at the goal, and has been successfully used for IL and RL~\cite{kober2013reinforcement,mulling2013learning,peters2008reinforcement}. The temporal segment approach of~\cite{mishra2017prediction}, on the other hand, predicts segments of a trajectory sequentially.
Recently, in the context of video prediction, Jayaraman et al. \cite{jayaraman2019time} proposed to predict salient frames in a goal-conditioned setting by a supervised learning loss that focuses on the `best' frames. This was used to predict a list of sub-goals for a tracking controller. In contrast, we propose to recursively predict sub-goals. Note that the method of \cite{jayaraman2019time} can be combined within our approach to learn the most salient sub-goal tree.

In RL, the idea of sub-goals has mainly been investigated under the options framework~\cite{sutton1999between}. In this setting, the goal is typically fixed (i.e., given by the reward in the MDP), and useful options are discovered using some heuristic such as bottleneck states~\cite{mcgovern2001automatic, menache2002q} or changes in the value function~\cite{konidaris2012robot}. Universal value functions~\cite{schaul2015universal,andrychowicz2017hindsight} learn a goal-conditioned value function using the Bellman equation. In contrast, in this work we propose a principled motivation for sub-goals based on the APSP problem, and develop a new RL formulation based on this principle. A connection between RL and the APSP has been suggested in~\cite{kaelbling1993learning,dhiman2018floyd}, based on the Floyd-Warshall algorithm. However, as discussed in Section~\ref{s:rl}, these approaches become unstable once function approximation is introduced. 

\vspace{-1em}
\section{Experiments}\label{sec:experiments}
\begin{figure*}
\centering
    \hfill
    \begin{subfigure}[b]{0.22\linewidth}
    \includegraphics[width=.97\textwidth]{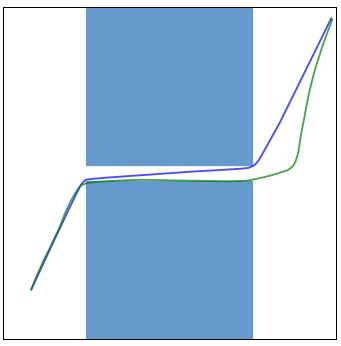}
    \caption{}
    \label{fig:easy}
  \end{subfigure}\hfill
  \begin{subfigure}[b]{0.22\linewidth}
    \includegraphics[width=.97\textwidth]{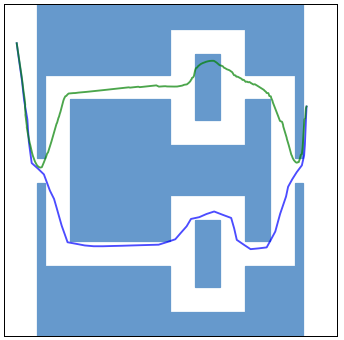}
    \caption{}
    \label{fig:hard}
  \end{subfigure}\hfill
  \begin{subfigure}[b]{0.22\linewidth}
    \includegraphics[width=1.0\textwidth]{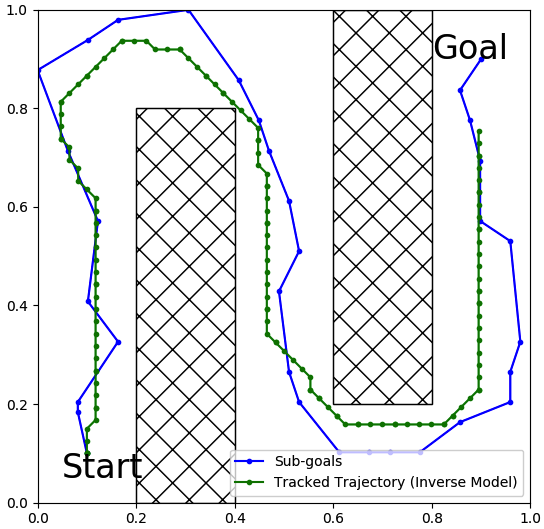}
    \caption{}
    \label{fig:rl}
  \end{subfigure}\hfill
  \begin{subfigure}[b]{0.22\linewidth}
    \includegraphics[width=0.93\textwidth]{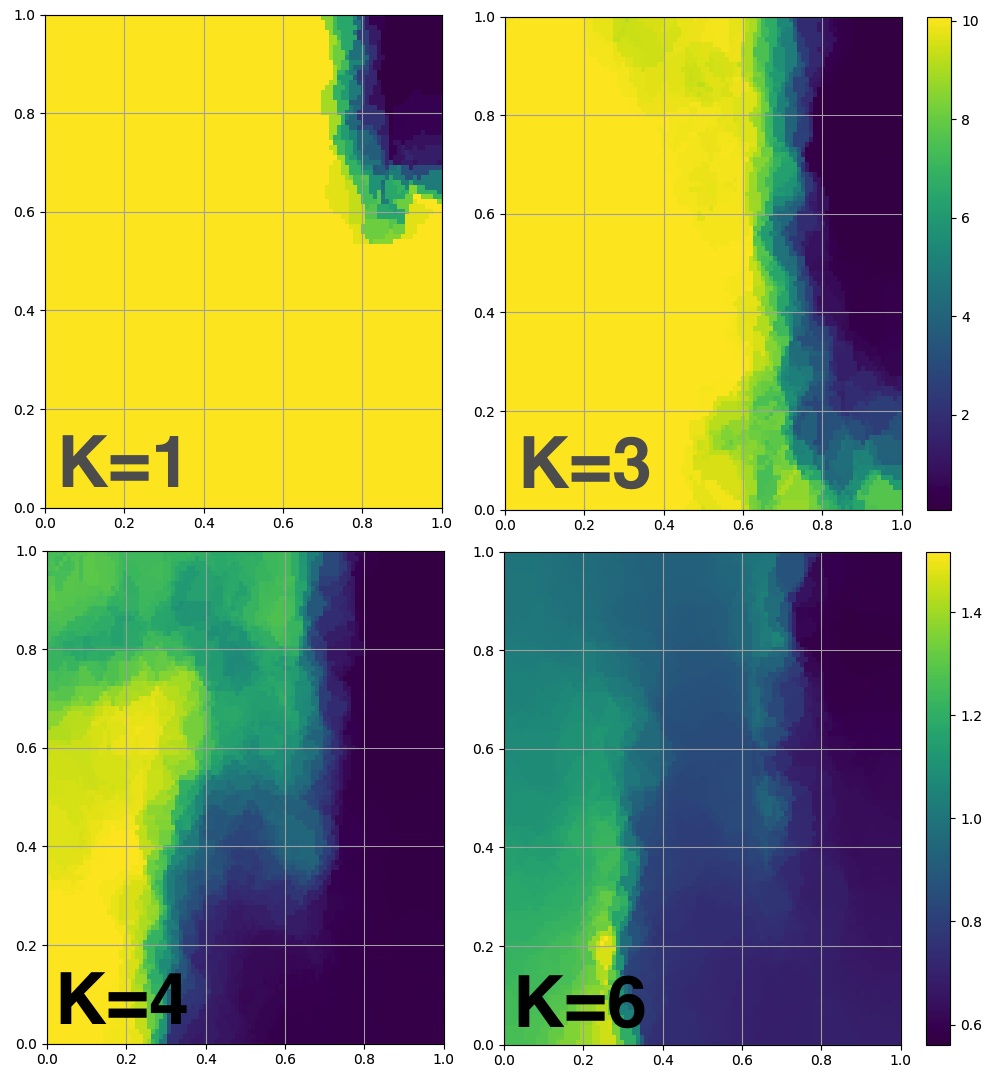}
    \caption{}
    \label{fig:rl_values}
  \end{subfigure}\hfill
\caption{Experiment domains and results. (a+b) IL results. The \textit{simple} domain (a) and \textit{hard} domain (b). A point robot should move only on the white free-space from a room on one side to the room on the other side while avoiding the blue obstacles. SGT plan (blue) executed successfully, \textit{sequential} plan (green) collides with the obstacles. (c+d) RL results. In the domain (c) a robot needs to navigate between the (hatched) obstacles. We show an example sub-goal tree (blue) and trajectory tracking it using an inverse model (green). In (d) we show the approximate value $\hat{V}_k(s,g=[0.9,0.9])$ for several values of $k$. Note how the reachable region to the goal (non-yellow) grows with $k$.}
\label{fig:scenarios}
\vspace{-5mm}
\end{figure*}
\vspace{-.5em}
We report our results for imitation learning and reinforcement learning in a motion planning scenario.
\vspace{-1.5em}
\paragraph{Imitation Learning Experiments:}
We compare the \textit{sequential} and \textit{Sub-Goal Tree}(SGT) approaches for BC.
Consider a point-robot motion-planning problem in a 2D world, with two obstacle scenarios termed \textit{simple} and \textit{hard}, as shown in Figure \ref{fig:scenarios}. In \textit{simple}, the distribution of possible motions from left to right is uni-modal, while in \textit{hard}, at least 4 modalities are possible.

For both scenarios we collected a set of 111K (100K train + 10K validation + 1K test)  expert trajectories from random start and goal states using a state-of-the-art motion planner (OMPL's\cite{sucan2012the-open-motion-planning-library} Lazy Bi-directional KPIECE \cite{csucan2009kinodynamic} with one level of discretization). 


To account for different trajectory modalities, we chose a Mixture Density Network (MDN)\cite{bishop1994mixture} as the parametric distribution of the predicted next state, both for the \textit{sequential} and the SGT representations. 
We train the MDN by maximizing likelihood using Adam~\cite{kingma2014adam}.
To ensure the same model capacity for both representations, we used the same network architecture, and both representations were trained and tested with the same data. Since the dynamical system is Markovian, the current and goal states are sufficient for predicting the next state in the plan, so we truncated the state history in the sequential model's input to contain only the current state.

For \textit{simple}, the MDNs had a uni-modal multivariate-Gaussian distribution, while for \textit{hard}, we experimented with 2 and 4 modal multivariate-Gaussian distributions, denoted as \textit{hard-2G}, and \textit{hard-4G}, respectively.
As we later show in our results the SGT representation captures the demonstrated distribution well even in the \textit{hard-2G} scenario, by modelling a bi-modal sub-goal distribution.

We evaluate the models using unseen start-goal pairs taken from the test set. 
To generate a trajectory, we follow Algorithms \ref{alg:sequential-sl-predict} and \ref{alg:sub-goal-sl-predict}, and connect the points using linear interpolation. 
We call a trajectory successful if it does not collide with an obstacle en-route to the goal. 
For a failed trajectory, we further measure the \textit{severity} of collision by the percentage of the trajectory being in collision.


Table \ref{tab:il-experiments_results} summarizes the results for both representations. The SGT representation is superior in all three evaluation criteria - motion planning success rate, trajectory prediction times (total time in seconds for the 1K trajectory predictions), and severity.
Upon closer look at the two \textit{hard} scenarios, the \textit{Sub Goal Tree} with a bi-modal MDN outperforms \textit{sequential} with 4-modal MDN, suggesting that the SGT trajectory decomposition better accounts for multi-modal trajectories.
Finally, the supplementary section contains further experiments considering additional baselines for the BC case.

\begin{table}[]
\centering
\begin{tabular}{l|ll|ll|ll}
\multirow{2}{*}{} & \multicolumn{2}{c|}{\textbf{Success Rate (\%)}} & \multicolumn{2}{c|}{\textbf{Prediction Times (Seconds)}} & \multicolumn{2}{c}{\textbf{Severity (\%)}} \\
                  & Sequential           & SGT            & Sequential               & SGT                 & Sequential           & SGT       \\ \hline
Simple            & 0.541                & \textbf{0.946}           & 487.179                  & \textbf{28.641}               & \textbf{0.0327}      & 0.0381              \\
Hard - 2G         & 0.013                & \textbf{0.266}           & 811.523                  & \textbf{52.22}                & 0.0803               & \textbf{0.0666}     \\
Hard - 4G         & 0.011                & \textbf{0.247}           & 1052.62                  & \textbf{53.539}               & 0.0779               & \textbf{0.0362}    
\end{tabular}
\caption{Results for IL experiments. 
}
\label{tab:il-experiments_results}
\vspace{-5mm}
\end{table}

\textbf{Batch RL Experiments:}
Next, we evaluate the approximate STDP algorithm. We consider a 2D particle moving in an environment with obstacles, as shown in Figure~\ref{fig:rl}. The particle can move a distance of $0.025$ in one of the eight directions, and suffers a constant cost of $0.025$ in free space, and a large cost of $10$ on collisions. The task is to reach from any starting point to within a $0.15$ distance of any goal point without colliding. This simple domain is a continuous-state optimal control Problem~\eqref{eq:opt_control}, and for start and goal positions that are distant, as shown in Figure~\ref{fig:rl}, it requires long-horizoned planning, making it suitable for studying batch RL algorithms.

To generate data, we sampled states and actions uniformly and independently, resulting in $125$K $(s,u,c,s')$ tuples. As function approximation, we opted for simplicity, and used K-nearest neighbors (KNN) for all our experiments, with K$_{neighbors}=5$. To solve the minimization over states in approximate STDP, we discretized the state space and searched over a $50\times 50$ grid of points.

A natural baseline in this setting is fitted-Q iteration~\cite{ernst2005tree, riedmiller2005neural}. We  verified that for a fixed goal, fitted-Q obtains near perfect results with our data. Then, to make it goal-conditioned, we used a universal Q-function~\cite{schaul2015universal}, requiring only a minor change in the algorithm (see supplementary for pseudo-code). 

To evaluate the different methods, we randomly chose 200 start and goal points, and measured the distance from the goal that the policies reach, and whether they collide with obstacles or not along the way. For fitted-Q, we used the greedy policy with respect to the learned Q function. The approximate STDP method, however, does not automatically provide a policy. Thus, we experimented with two methods for extracting a policy from the learned sub-goal tree. The first is training an inverse model $f_{IM}(s,s')$ -- a mapping from $s,s'$ to $u$, using our data, and the same KNN function approximation. To reach a sub-goal $g$ from state $s$, we simply run $f_{IM}(s,g)$ until we are close enough to $g$ (we set the threshold to $0.15$). An alternative method is first using fitted-Q to learn a goal-based policy, as described above, and then running this policy on the sub-goals. The idea here is that the sub-goals learned by approximate STDP can help fitted-Q overcome the long-horizon planning required in this task.
Note that \emph{all methods use exactly the same data, and the same function approximation} (KNN), making for a fair comparison.

In Table \ref{tab:RL-experiments_results} we report our results. Fitted-Q did not succeed in reaching any but the very closest goals, resulting in a high average distance to goal. Approximate STDP, on the other hand, computed meaningful sub-goals for almost all test cases, resulting in a low average distance to goal when tracked by the inverse model. Figure~\ref{fig:scenarios} shows an example sub-goal tree and a corresponding tracked trajectory. The fitted-Q policy did learn not to hit obstacles, resulting in the lowest collision rate. This is expected, as colliding leads to an immediate high cost, while the inverse model is not trained to take cost into account. Interestingly, combining the fitted-Q policy with the sub-goals improves both long-horizon planning and short horizoned collision avoidance. In Figure \ref{fig:rl_values} we plot the approximate value function $\hat{V}_k$ for different $k$ and a specific goal. Note how the reachable parts of the state space to the goal expand with $k$.
\begin{table}[]
\centering
\begin{tabular}{l|c|c}
                                   & Average Distance to Goal & Average Collision Rate \\ \hline
Sub-goal Tree + Inverse Model      & 0.13                     & 0.25                   \\
Sub-goal Tree + Q-value controller & 0.29                     & 0.06                   \\
Q-value controller                 & 0.58                     & 0.02                  
\end{tabular}
\caption{Results for RL-based controllers. 
}
\label{tab:RL-experiments_results}
\vspace{-8mm}
\end{table}
\vspace{-1em}
\section{Discussion}
\vspace{-1em}
We have shown that by viewing a trajectory as a hierarchical composition of sub-goals, improved learning and optimization algorithms can be derived. We believe that these ideas would be important for robotics and autonomous driving, and other domains where fast predictions are important.

Our novel approach to RL based on the APSP problem raises many directions for future research. The optimization over sub-goal states motivates an actor-critic approach, where an actor would learn to predict the sub-goals. 
It is interesting whether our approach could be extended to high-dimensional state observations, such as in goal-based video prediction. Intuitively, predicting some features of sub-goals along the trajectory seems like a natural step before predicting the full video trajectory. Finally, we believe that stochastic systems can also be handled by learning the \emph{probability} of reaching different sub-goals.


\bibliographystyle{abbrv}
\bibliography{main}

\begin{thebibliography}{10}

\bibitem{DBLP:journals/corr/abs-1803-08375}
A.~F. Agarap.
\newblock Deep learning using rectified linear units (relu).
\newblock {\em CoRR}, abs/1803.08375, 2018.

\bibitem{andrychowicz2017hindsight}
M.~Andrychowicz, F.~Wolski, A.~Ray, J.~Schneider, R.~Fong, P.~Welinder,
  B.~McGrew, J.~Tobin, O.~P. Abbeel, and W.~Zaremba.
\newblock Hindsight experience replay.
\newblock In {\em Advances in Neural Information Processing Systems}, pages
  5048--5058, 2017.

\bibitem{argall2009survey}
B.~D. Argall, S.~Chernova, M.~Veloso, and B.~Browning.
\newblock A survey of robot learning from demonstration.
\newblock {\em Robotics and autonomous systems}, 57(5):469--483, 2009.

\bibitem{bertsekas2005dynamic}
D.~Bertsekas.
\newblock Dynamic programming and optimal control: Volume {II}.
\newblock 2005.

\bibitem{bishop1994mixture}
C.~M. Bishop.
\newblock Mixture density networks.
\newblock Technical report, Citeseer, 1994.

\bibitem{dhiman2018floyd}
V.~Dhiman, S.~Banerjee, J.~M. Siskind, and J.~J. Corso.
\newblock Floyd-warshall reinforcement learning learning from past experiences
  to reach new goals.
\newblock {\em arXiv preprint arXiv:1809.09318}, 2018.

\bibitem{ernst2005tree}
D.~Ernst, P.~Geurts, and L.~Wehenkel.
\newblock Tree-based batch mode reinforcement learning.
\newblock {\em Journal of Machine Learning Research}, 6(Apr):503--556, 2005.

\bibitem{gu2017deep}
S.~Gu, E.~Holly, T.~Lillicrap, and S.~Levine.
\newblock Deep reinforcement learning for robotic manipulation with
  asynchronous off-policy updates.
\newblock In {\em 2017 IEEE international conference on robotics and automation
  (ICRA)}, pages 3389--3396. IEEE, 2017.

\bibitem{ijspeert2013dynamical}
A.~J. Ijspeert, J.~Nakanishi, H.~Hoffmann, P.~Pastor, and S.~Schaal.
\newblock Dynamical movement primitives: learning attractor models for motor
  behaviors.
\newblock {\em Neural computation}, 25(2):328--373, 2013.

\bibitem{jayaraman2019time}
D.~Jayaraman, F.~Ebert, A.~A. Efros, and S.~Levine.
\newblock Time-agnostic prediction: Predicting predictable video frames.
\newblock In {\em ICLR}, 2019.

\bibitem{kaelbling1993learning}
L.~P. Kaelbling.
\newblock Learning to achieve goals.
\newblock In {\em IJCAI}, pages 1094--1099. Citeseer, 1993.

\bibitem{kingma2014adam}
D.~P. Kingma and J.~Ba.
\newblock Adam: A method for stochastic optimization.
\newblock {\em arXiv preprint arXiv:1412.6980}, 2014.

\bibitem{DBLP:journals/corr/KingmaB14}
D.~P. Kingma and J.~Ba.
\newblock Adam: {A} method for stochastic optimization.
\newblock {\em CoRR}, abs/1412.6980, 2014.

\bibitem{kober2013reinforcement}
J.~Kober, J.~A. Bagnell, and J.~Peters.
\newblock Reinforcement learning in robotics: A survey.
\newblock {\em The International Journal of Robotics Research},
  32(11):1238--1274, 2013.

\bibitem{konidaris2012robot}
G.~Konidaris, S.~Kuindersma, R.~Grupen, and A.~Barto.
\newblock Robot learning from demonstration by constructing skill trees.
\newblock {\em The International Journal of Robotics Research}, 31(3):360--375,
  2012.

\bibitem{lagoudakis2003least}
M.~G. Lagoudakis and R.~Parr.
\newblock Least-squares policy iteration.
\newblock {\em Journal of machine learning research}, 4(Dec):1107--1149, 2003.

\bibitem{lavalle2006planning}
S.~M. LaValle.
\newblock {\em Planning algorithms}.
\newblock Cambridge university press, 2006.

\bibitem{mcgovern2001automatic}
A.~McGovern and A.~G. Barto.
\newblock Automatic discovery of subgoals in reinforcement learning using
  diverse density.
\newblock 2001.

\bibitem{menache2002q}
I.~Menache, S.~Mannor, and N.~Shimkin.
\newblock Q-cut—dynamic discovery of sub-goals in reinforcement learning.
\newblock In {\em European Conference on Machine Learning}, pages 295--306.
  Springer, 2002.

\bibitem{mishra2017prediction}
N.~Mishra, P.~Abbeel, and I.~Mordatch.
\newblock Prediction and control with temporal segment models.
\newblock In {\em Proceedings of the 34th International Conference on Machine
  Learning-Volume 70}, pages 2459--2468. JMLR. org, 2017.

\bibitem{mulling2013learning}
K.~M{\"u}lling, J.~Kober, O.~Kroemer, and J.~Peters.
\newblock Learning to select and generalize striking movements in robot table
  tennis.
\newblock {\em The International Journal of Robotics Research}, 32(3):263--279,
  2013.

\bibitem{peters2008reinforcement}
J.~Peters and S.~Schaal.
\newblock Reinforcement learning of motor skills with policy gradients.
\newblock {\em Neural networks}, 21(4):682--697, 2008.

\bibitem{pomerleau1989alvinn}
D.~A. Pomerleau.
\newblock {ALVINN}: An autonomous land vehicle in a neural network.
\newblock In {\em NIPS}, pages 305--313, 1989.

\bibitem{qureshi2018motion}
A.~H. Qureshi, M.~J. Bency, and M.~C. Yip.
\newblock Motion planning networks.
\newblock {\em arXiv preprint arXiv:1806.05767}, 2018.

\bibitem{riedmiller2005neural}
M.~Riedmiller.
\newblock Neural fitted q iteration--first experiences with a data efficient
  neural reinforcement learning method.
\newblock In {\em European Conference on Machine Learning}, pages 317--328.
  Springer, 2005.

\bibitem{ross2011reduction}
S.~Ross, G.~Gordon, and D.~Bagnell.
\newblock A reduction of imitation learning and structured prediction to
  no-regret online learning.
\newblock In {\em Proceedings of the fourteenth international conference on
  artificial intelligence and statistics}, pages 627--635, 2011.

\bibitem{russel2010AI}
S.~J. Russell and P.~Norvig.
\newblock {\em Artificial Intelligence - {A} Modern Approach {(3.} internat.
  ed.)}.
\newblock Pearson Education, 2010.

\bibitem{schaul2015universal}
T.~Schaul, D.~Horgan, K.~Gregor, and D.~Silver.
\newblock Universal value function approximators.
\newblock In {\em International conference on machine learning}, pages
  1312--1320, 2015.

\bibitem{csucan2009kinodynamic}
I.~A. {\c{S}}ucan and L.~E. Kavraki.
\newblock Kinodynamic motion planning by interior-exterior cell exploration.
\newblock In {\em Algorithmic Foundation of Robotics VIII}, pages 449--464.
  Springer, 2009.

\bibitem{sucan2012the-open-motion-planning-library}
I.~A. {\c{S}}ucan, M.~Moll, and L.~E. Kavraki.
\newblock The {O}pen {M}otion {P}lanning {L}ibrary.
\newblock {\em {IEEE} Robotics \& Automation Magazine}, 19(4):72--82, December
  2012.
\newblock \url{http://ompl.kavrakilab.org}.

\bibitem{sung2018robobarista}
J.~Sung, S.~H. Jin, and A.~Saxena.
\newblock Robobarista: Object part based transfer of manipulation trajectories
  from crowd-sourcing in 3d pointclouds.
\newblock In {\em Robotics Research}, pages 701--720. Springer, 2018.

\bibitem{sutton1998reinforcement}
R.~S. Sutton and A.~G. Barto.
\newblock {\em Reinforcement learning: An introduction}, volume~1.
\newblock MIT press Cambridge, 1998.

\bibitem{sutton1999between}
R.~S. Sutton, D.~Precup, and S.~Singh.
\newblock Between mdps and semi-mdps: A framework for temporal abstraction in
  reinforcement learning.
\newblock {\em Artificial intelligence}, 112(1-2):181--211, 1999.

\bibitem{tsitsiklis2001regression}
J.~N. Tsitsiklis and B.~Van~Roy.
\newblock Regression methods for pricing complex american-style options.
\newblock {\em IEEE Transactions on Neural Networks}, 12(4):694--703, 2001.

\bibitem{zhang2018auto}
H.~Zhang, E.~Heiden, R.~Julian, Z.~He, J.~J. Lim, and G.~S. Sukhatme.
\newblock Auto-conditioned recurrent mixture density networks for complex
  trajectory generation.
\newblock {\em arXiv preprint arXiv:1810.00146}, 2018.

\end{thebibliography}

\pagebreak

\appendix

\section{Imitation Learning - Full Experimental Settings}
In this appendix we summarize all the technical details required to reproduced the imitation learning results discussed in Section \ref{sec:experiments}.
We start by providing a detailed description of our neural network structure and training process (we use the same structure for all the scenarios and all the representation variants).

The network consists of 4 Fully-Connected hidden layers connecting the inputs with the outputs (the outputs are the distribution parameters defined by the MDN according to the scenario: 1, 2 and 4 multivariate-Gaussians for the \textit{simple}, \textit{hard-2N} and \textit{hard-4N} respectively). 
We used Rectified-Linear Units\cite{DBLP:journals/corr/abs-1803-08375} (ReLU) as our activation function.

The training procedure used Adam\cite{DBLP:journals/corr/KingmaB14} with a batch of 50, and a learning rate of 0.001 that decays by a factor of 0.8 after the validation loss does not improve after 6 consecutive test times. We also defined a minimal learning rate of 0.00001.
We also defined gradient clipping - gradients with L2-norm greater 200 are re-normalized to 200.

We further wanted to emphasize the following details:
\begin{enumerate}
    \item We always connect the an endpoint to the nearest endpoints. Basically, this means that for the sequential model we always connect the last prediction to the goal state. This allows us to circumvent the issue that sequential models have with not reaching to the goal state directly. 
    
    \item The models reported in the main text continue to make prediction even after collision. This gives them the chance to "recover" by minimizing the distance traveled in collision. This also allows us to have the \textit{severity} metric to further analyze which model is beneficial for which scenarios.
    
    \item Early stopping: we use the 10K validation trajectories to evaluate the loss of the model during training for trajectories not included in the training set. Every time we discover a model that scores a lower validation loss we save it.
    The resulting model is the one with lowest validation loss.
    
    \item Sampling: our models are a mixture of Gaussians, however, during test time we are only interested in the safest trajectory a model can find. Thus, to sample from the distribution, we sample a mode (i.e. a specific Gaussian from the mixture) and we take the mean prediction for that Gaussian.
\end{enumerate}

\section{Behavioral Cloning Algorithms}\label{supp:algorithms}

In this appendix we explicitly write the BC algorithms used in Section \ref{sec:goal-based-il} that are not in the main text.
We start with the algorithms for the \textit{sequential} approach.
Algorithm \ref{alg:sequential-sl-train} first provides a sequential training method based on Stochastic Gradient Descent.
Next, Algorithm \ref{alg:sequential-sl-predict}, provides a prediction algorithm  that predicts the next state conditioned on the current prediction and the goal.

Next, we explicitly describe the pseudo-code for training a \textit{Sub-Goal Tree} in Algorithm \ref{alg:sub-goal-sl-train}, and we also provide an inference pseudo-code that emphasizes parallelism as opposed to the recursive version provided in the main text (Algorithm \ref{alg:sub-goal-sl-predict-parallel}).

\begin{algorithm}[htp]
  \SetAlgoLined\DontPrintSemicolon
  \SetKwProg{myalg}{Algorithm}{}{}
  \myalg{}{
  \nl Input: dataset $D=\left\{ \tau_i = s_0^i, s_1^i \dots s_{T_i}^i \right\}_{i=1}^N$, train steps $M$, batch size $B$\;
  \nl Initialize parameters $\theta$ for parametric distribution $P_{\hat{\pi}}(s_m |s, g;\theta)$\;
  \For{$i: 1...M$ }{
  \nl Sample batch of size $B$, each sample: $\tau_i \sim D$, $s_t \sim \tau_i$\;
  \nl $g\leftarrow$ Get goal state according to $\tau_i$ for all items in batch\;
  \nl $h_{t-1}\leftarrow$ Get history of $s_t$ according to $\tau_i$ for all items in batch\;
  \nl Update $\theta$ by minimizing the negative log-likelihood loss: \begin{align*}
       L = -\frac{1}{B}\cdot\sum_{b=1}^B{\bigtriangledown_{\theta}\log{P_{\hat{\pi}}(s_t^b|h_{t-1}^b, g^b; \theta)}}
  \end{align*}\;
  }
  \Return $\theta$ \;
  }
\caption{Sequential BC SGD-Based Training}\label{alg:sequential-sl-train}
\end{algorithm} 

\begin{algorithm}[htp]
  \SetAlgoLined\DontPrintSemicolon
  \SetKwProg{myalg}{Algorithm}{}{}
  \myalg{}{
  \nl Input: parameters $\theta$ of parametric distribution $P_{\hat{\pi}}$, start state $s$, goal state $g$, max steps $K$ \;
  \nl Initialize empty list $L$\;
  \nl Append $s$ to $L$\;
  \For{$k: 1...K$ }{
  \nl Predict next state $s \sim P_{\hat{\pi}}(s |h, g;\theta)$\;
  \tcc{can add a stopping condition if required} \;
  \nl Append $s$ to $L$\;
  }
  \nl \Return $L$\;
  }
\caption{Sequential BC Trajectory Prediction }\label{alg:sequential-sl-predict}
\end{algorithm}

\begin{algorithm}[htp]
  \SetAlgoLined\DontPrintSemicolon
  \SetKwProg{myalg}{Algorithm}{}{}
  \myalg{}{
  \nl Input: dataset $D=\left\{ \tau_i = s_0^i, s_1^i \dots s_{T_i}^i \right\}_{i=1}^N$, train steps $M$, batch size $B$\;
  \nl Initialize parameters $\theta$ for parametric distribution $P_{\hat{\pi}}(s_m |s, g;\theta)$\;
  \For{$i: 1...M$ }{
  \nl Sample batch of size $B$, each sample: $\tau_i \sim D$, $s_1, s_2 \sim \tau_i$\;
  \nl $s_m\leftarrow$ Get midpoint of $[s_1, s_2]$ according to $\tau_i$ for all items in batch\;
  \nl Update $\theta$ by minimizing the negative log-likelihood loss: \begin{align*}
       L = -\frac{1}{B}\cdot\sum_{b=1}^B{\bigtriangledown_{\theta}\log{P_{\hat{\pi}}(s_m^b|s_1^b, s_2^b; \theta)}}
  \end{align*}\;
  }
  \Return $\theta$ \;
  }
\caption{Sub-Goal Tree BC SGD-Based Training}\label{alg:sub-goal-sl-train}
\end{algorithm} 

\begin{algorithm}[htp]
  \SetAlgoLined\DontPrintSemicolon
  \SetKwProg{myalg}{Algorithm}{}{}
  \myalg{}{
  \nl Input: parameters $\theta$ of parametric distribution $P_{\hat{\pi}}$, start state $s$, goal state $g$, max depth $K$ \;
  \nl Initialize empty lists $q_{next}, q_{current}$\;
  \nl Put segment $[s,g]$ in $q_{current}$\;
  \For{$k: 1...K$ }{
  \tcc{execute while body in parallel for all items in $q_{current}$} \;
  \While{$q_{current}$ not empty}{
  \nl Pop segment $[s_1, s_2]$ from $q_{current}$\;
  \nl Predict midpoint $s_m \sim P_{\hat{\pi}}(s_m |s_1, s_2;\theta)$\;
  \nl Put segment $[s_1, s_m]$ and $[s_m, s_2]$ to $q_{next}$\;
  }
  \nl $q_{current}\leftarrow q_{next}$\;
  \nl Empty $q_{next}$\;
  }
  \nl \Return segments in $q_{current}$\;
  }
\caption{Sub-Goal Tree BC Trajectory Prediction (Parallel version)}\label{alg:sub-goal-sl-predict-parallel}
\end{algorithm}

\section{Baseline RL Algorithms}

\newcommand{\unit}{1\!\!1}

In Algorithm \ref{alg:fitted_Q} we present a goal-based versions of fitted-Q iteration~\cite{ernst2005tree} using universal function approximation~\cite{schaul2015universal}, which we used as a baseline in our experiments.

\begin{algorithm}[htp]
  \SetAlgoLined\DontPrintSemicolon
  \SetKwProg{myalg}{Algorithm}{}{}
  \myalg{ }{
  \nl Input: dataset $D = \left\{ s, u, c, s'\right\}$, Goal reached threshold $\delta$ \;
  \nl Create transition data set $D_{trans} = \left\{ s, u, s'\right\}$ and targets $T_{trans} = \left\{ c\right\}$ with $s,s',c$ taken from $D$\;
  \nl Fit $\hat{Q}(s,u,s')$ to data in $D_{trans}$ \;
  \For{$k: 1...K$}{
  \nl Create random goal data set $D_{goal} = \left\{ s, u, g\right\}$ and targets 
  $$T_{goal} = \left\{ \left\{ c(s,u) + \min_{u'}\hat{Q}(s', u', g)\unit_{||s'-g||>\delta}\right\}\right\}$$ with $s,u,s'$ taken from $D$ and $g$ randomly chosen from states in $D$\;
  \nl Fit $\hat{Q}(s,u,s')$ to data in $D_{goal}$ \;
  }
  }
\caption{Fitted Q with Universal Function Approximation}
\label{alg:fitted_Q}
\end{algorithm}

Next, in Algorithm \ref{alg:approximate_FW} we present an approximate dynamic programming version of Floyd-Warshall RL~\cite{kaelbling1993learning} that corresponds to the batch RL setting we investigate. This algorithm was not stable in our experiments, as the value function converged to zero for all states (when removing the self transition fitting in line 7 of the algorithm, the values converged to a constant value).

\begin{algorithm}[htp]
  \SetAlgoLined\DontPrintSemicolon
  \SetKwProg{myalg}{Algorithm}{}{}
  \myalg{}{
  \nl Input: dataset $D = \left\{ s, u, c, s'\right\}$, Maximum path cost $C_{max}$ \;
  \nl Create transition data set $D_{trans} = \left\{ s, s'\right\}$ and targets $T_{trans} = \left\{ c\right\}$ with $s,s',c$ taken from $D$\;
  \nl Create random transition data set $D_{random} = \left\{ s, s_{rand}\right\}$ and targets $T_{random} = \left\{ C_{max}\right\}$ with $s,s_{rand}$ randomly chosen from states in $D$\;
  \nl Create self transition data set $D_{self} = \left\{ s, s\right\}$ and targets $T_{self} =\left\{ 0\right\}$ with $s$ taken from $D$\;
  \nl Fit $\hat{V}(s,s')$ to data in $D_{trans}, D_{random}, D_{self}$ \;
  \For{$k: 1...K$}{
  \nl Create random goal and mid-point data set $D_{goal} = \left\{ s, g\right\}$ and targets 
  $$T_{goal} = \left\{ \min\left\{ \hat{V}(s, g), \hat{V}(s, s_m) + \hat{V}(s_m, g)\right\}\right\}$$ with $s,s_m,g$ randomly chosen from states in $D$\;
  \nl Create self transition data set $D_{self} = \left\{ s, s\right\}$ and targets $T_{self} =\left\{ 0\right\}$ with $s$ taken from $D$\;
  \nl Fit $\hat{V}(s,s')$ to data in $D_{goal},D_{self}$ \;
  }
  }
\caption{Approximate Floyd Warshall}
\label{alg:approximate_FW}
\end{algorithm}

\section{Additional Baseline for BC}

In this section we provide some additional baselines to the supervised learning experiments described in the main text in Section \ref{sec:experiments}.
We answer these questions:
\begin{enumerate}
    \item What is the contribution of the iterative conditioning in the SGT approach?
    
    \item If the model stops predicting when a collision is encountered (but the prediction are still connected to subsequent endpoint) how would this affect the results?
    
\end{enumerate}


\subsection{Contribution of the iterative prediction of the Sub-Goal Tree}\label{apx:sl-direct-prediction}

Each representation discussed, produces for trajectory generation a list of segments' endpoints. 
The advantage of SGT over the \textit{sequential} approach, is that the order and conditioning on those endpoints allows a concurrent computation to predict exponentially faster.
Taking this view to the extreme, we design the \textit{direct} representation, which attempts to predict all intermediate state of the trajectory directly from the start and goal states.
The advantage of this method is that given a machine capable of computing all the endpoints in parallel, we obtain a prediction model that generates a trajectory in $O(1)$ time units.

\textbf{Model: } Keeping the same notations as in Section \ref{sec:goal-based-il}, in the \textit{direct} approach we directly decompose the trajectory distribution to endpoints predictions that depend only on the start $s$ and goal states $g$, and the index of the endpoint to predict, namely: $P_{\hat{\pi}}(s_0,s_1, \dots s_T|s,g;\theta) = P_{\hat{\pi}}(s_1 | s, g,1 ; \theta) P_{\hat{\pi}}(s_2 | s, g,2;\theta)\dots P_{\hat{\pi}}(s_T | s, g, T ; \theta)$. Applying this decomposition on Eq.\ref{eq:bc-max-likelihood-general} and taking the log results in:
$
    \theta^* = \arg\max_\theta { \mathbb{E}_{\tau_i\sim D^{\pi^*}}\left[
        \log{P_{\hat{\pi}}(s_1^i | s^i, g^i, 1 ; \theta)}+\log{P_{\hat{\pi}}(s_2^i | s^i, g^i,2;\theta)} \dots +\log{P_{\hat{\pi}}(s_T^i | s^i, g^i,T ; \theta)}
    \right] }
$.

This model can be trained in an SGD algorithm similar to the SGT Algorithm \ref{alg:sub-goal-sl-train}. 
The only difference is that the model should now consider more than the mid-state as the target (or label) for the loss function. For instance, if our goal is to create an algorithm that predicts $64$ segments, we should train such model to predict on $63$ intermediate states in every trajectory $\tau_i$ in the data.
Inference for such an algorithm is straight-forward: given $s$ and $g$, we generate a prediction for every index and concatenate.

We investigate the trade-offs between fast prediction and accuracy in the experiments at the end of this appendix.
Our findings suggest that the model is much faster but far less accurate than SGT (due to the different decompositions assumed by each model). As a final note, one might combine both approaches in order to extend the SGT from a binary-tree, to a $k$-ary-tree, predicting more endpoints at each call of the recursive process. Complexity-wise this will only affect the base of the logarithm, changing it from $2$ to $k$, but we leave this investigation for future research.

\subsection{Additional Imitation Learning Experiments}

In this appendix we defined the following variations of our supervised model:
\begin{enumerate}
    \item Stop on collision policy: we wanted to investigate the performance trade-offs of the \textit{sequential} and SGT representations of planning time vs. success rate. For this we define two model variations \textit{sequential:stop-coll} and \textit{SGT:stop-coll}.
    
    \item \textit{direct} prediction - predicting intermediate states directly from the start and goal as described previously. 
\end{enumerate}

We start with the analysis of the 'stop-on-collision'. Table \ref{tab:il-experiments_results-collsions} shows the results for the \textit{sequential:stop-coll}, and \textit{SGT:stop-coll} approaches. As we can see, both methods are faster than their non-stopping variants, although the SGT is only marginally faster. Moreover, we can see that the \textit{severity} scores do worsen. Both of these demonstrate the trade-off of accuracy vs. trajectory prediction times, and motivate our selection that the variants that continue to predict after collision should be considered as the proposed variants.

\begin{table}[]
\begin{tabular}{l|ll|ll|ll}
\multirow{2}{*}{} & \multicolumn{2}{c|}{\textbf{Success Rate (\%)}}                                                                                      & \multicolumn{2}{c|}{\textbf{Prediction Times (Seconds)}}                                                                             & \multicolumn{2}{c}{\textbf{Severity (\%)}}                                                                                           \\
                  & \begin{tabular}[c]{@{}l@{}}Sequential:\\ stop-coll\end{tabular} & \begin{tabular}[c]{@{}l@{}}SGT:\\ stop-coll\end{tabular} & \begin{tabular}[c]{@{}l@{}}Sequential:\\ stop-coll\end{tabular} & \begin{tabular}[c]{@{}l@{}}SGT:\\ stop-coll\end{tabular} & \begin{tabular}[c]{@{}l@{}}Sequential:\\ stop-coll\end{tabular} & \begin{tabular}[c]{@{}l@{}}SGT:\\ stop-coll\end{tabular} \\ \hline
Simple            & 0.541                                                           & 0.946                                                              & 322.347                                                         & 28.542                                                             & 0.4551                                                          & 0.0875                                                             \\
Hard - 2G         & 0.014                                                           & 0.255                                                              & 217.494                                                         & 39.925                                                             & 0.5897                                                          & 0.3479                                                             \\
Hard - 4G         & 0.009                                                           & 0.232                                                              & 252.718                                                         & 53.58                                                              & 0.55                                                            & 0.0362                                                            
\end{tabular}
\caption{Success rates, prediction times (seconds) and severity by representation type for all scenarios for 1K test trajectories.}
\label{tab:il-experiments_results-collsions}
\end{table}

Next, we investigate the other proposed model, the \textit{direct} prediction. The results of this model are in Table \ref{tab:il-direct-prediction}. First, we see that indeed the trajectory prediction time is much shorter. Further, we can see that the model is good in some easy cases, as it was able to predict better than the \textit{sequential} model for the \textit{easy} scenario by a large margin. However, as the prediction problem becomes harder, it seems the extra conditioning of the \textit{sequential} prediction allows it to make better prediction as the \textit{direct} approach attains almost zero success rate.

\begin{table}[]
\begin{tabular}{l|lll}
          & Success Rate (\%) & Prediction Times (Seconds) & Severity (\%) \\ \hline
Simple    & 0.716             & \textbf{13.206}            & 0.0343        \\
Hard - 2G & 0.0               & \textbf{27.345}            & 0.1172        \\
Hard - 4G & 0.004             & \textbf{28.718}            & 0.1195       
\end{tabular}
\caption{Success rates, prediction times (seconds) and severity for the \textit{direct} prediction approach for all scenarios for 1K test trajectories.}
\label{tab:il-direct-prediction}
\end{table}
\end{document}